%% file: main.tex
\documentclass{article}

\newif\ifarxiv
\arxivtrue  

\usepackage{iclr2026_conference,times}
\usepackage{amsmath,amssymb,amsthm}
\usepackage{graphicx}
\usepackage{float}
\usepackage{placeins}
\usepackage{booktabs}
\usepackage{hyperref}
\usepackage{url}

\ifarxiv\iclrfinalcopy\fi

\theoremstyle{definition}

\theoremstyle{plain}
\newtheorem{theorem}{Theorem}
\newtheorem{proposition}{Proposition}

\newcommand{\R}{\mathbb{R}}
\newcommand{\softmax}{\mathrm{softmax}}
\newcommand{\inner}[2]{\langle #1, #2 \rangle}

\title{All Routes Lead to Collapse}

\author{K.~R.~Balasubramanian\\
Independent Research}

\begin{document}
\maketitle
\ifarxiv\lhead{Preprint}\fi

\begin{abstract}
Attention sinks, representation collapse, and norm stratification are treated as
transformer-specific pathologies. We show they are not specific to attention:
they are what \emph{content-based routing} does under a fixed similarity metric.
We give a reframing identity: softmax attention is Boltzmann-weighted
aggregation over Euclidean distances with constant key norms, so its score omits
a $-\|k\|^2$ term and is blind to key magnitude. This predicts that any router
whose metric is ill-matched to its representations should compensate, by
concentrating its routing and collapsing the routed representations. We test it on
routers that score and aggregate over different axes: softmax attention over tokens
(nine pretrained transformers), graph attention over nodes, a selective state-space
model and a recurrent mixer over time, and learned residuals over depth. All develop
the same signature, and two within-model ablations show it is \emph{caused} by the
routing mechanism rather than by incidental dynamics. The \emph{form} is contingent,
set by the strength of the positional brake each router carries alongside its content
score; we sweep that brake and move the onset across its whole range. The
\emph{mechanism} is not contingent, and it does not require norm stratification: a
router with norm-normalized keys concentrates just the same. We do not claim these
models implement Riemannian geometry; the geometric view is a diagnostic that names
the inadequacy of the flat, norm-blind metric.
\end{abstract}

\input{sections/01_introduction}
\input{sections/02_background}
\input{sections/03_motivation}
\input{sections/04_transformers}
\input{sections/05_crossarch}
\input{sections/06_hypothesis}
\input{sections/07_discussion}
\input{sections/08_conclusion}

\bibliography{references}
\bibliographystyle{iclr2026_conference}

\appendix
\renewcommand{\topfraction}{0.95}
\renewcommand{\bottomfraction}{0.95}
\renewcommand{\textfraction}{0.05}
\renewcommand{\floatpagefraction}{0.70}
\input{sections/appendix_theory}
\input{sections/appendix_methods}
\input{sections/appendix_data}
\input{sections/appendix_convergent}
\input{sections/appendix_taxonomy}

\end{document}

%% file: sections/01_introduction.tex
\section{Introduction}
\label{sec:introduction}

Trained transformers develop a cluster of striking regularities. A few tokens absorb most of
the attention \citep{xiao2024streaming}, the hidden representations collapse toward a low-rank
subspace with depth \citep{dong2021attention}, and the key and query norms stratify rather than
staying uniform. These are usually read as pathologies of attention, diagnosed and patched one
model and one mechanism at a time. We argue they are not pathologies of attention at all. They
are what content-based routing does under a fixed, norm-blind similarity metric, and they appear
in any router that shares that metric, whatever it routes over.\ifarxiv\footnote{Code and data: \url{https://github.com/parzi-val/all-routes-lead-to-collapse}}\fi

The starting point is a reframing identity. Softmax attention can be written as a
Boltzmann-weighted aggregation over Euclidean distances between queries and keys, in which the
score drops a $-\|k\|^2$ term and so cannot see key magnitude. The metric the router uses is flat
and blind to norm. A router whose metric is ill-matched to its representations has to compensate,
and the compensation has a shape: the routing concentrates, the routed representations collapse,
and their norms stratify. The identity is a sufficient condition, not a claim that any model
implements geometry; it tells us where to look.

We look in five places. Across nine pretrained transformers the signature is present and
quantitatively clear against matched null baselines. It then appears, unchanged in kind, in four
routers that are not standard token attention: graph attention on heterophilic graphs, a
selective state-space model with no explicit attention, a recurrent mixer over time, and
attention residuals that route over depth. Where the routing weights can be reconstructed from
quantities we hold fixed, we ablate them and find the concentration is caused by the routing
mechanism rather than by incidental dynamics. What differs across architectures is the form, the
onset depth and the strength and the particular subspace, set largely by the strength of the
positional brake each router carries alongside its content score; we sweep that brake in two
architectures and watch the form move with it. What does not differ is the mechanism.

Our contributions are:
\begin{itemize}
\item A reframing identity (Section~\ref{sec:motivation}) that exposes the norm-blind metric
inside softmax routing and predicts a compensation signature.
\item Measurements of that signature across nine transformers against null baselines
(Section~\ref{sec:transformers}), and across four non-standard routers spanning graphs, time, and
depth, including two within-model causal ablations (Section~\ref{sec:crossarch}).
\item A hypothesis (Section~\ref{sec:hypothesis}) that separates the invariant mechanism from the
contingent form, names the positional brake as what sets the form, and shows with a
normalized-key router that norm stratification is one compensation rather than the cause.
\end{itemize}
The routes differ; the destination does not.

%% file: sections/02_background.tex
\section{Background and related work}
\label{sec:background}

\paragraph{Three pathologies, studied separately.}
Three phenomena recur in analyses of trained transformers. Attention sinks: a small number
of positions, often the first token, absorb a large share of the attention mass
\citep{xiao2024streaming}. Rank collapse: with depth the hidden representations lose
effective dimensionality and drift toward a shared subspace, an effect that pure attention
drives doubly exponentially \citep{dong2021attention}. And norm stratification: key and
query norms spread out rather than staying uniform. These are usually studied one at a time
and as properties of attention. We read them as three faces of one routing mechanism.

\paragraph{Architectures as routers.}
The models we measure share a structure: each scores a set of sources and aggregates them by
the scores. Softmax attention \citep{vaswani2017} scores tokens by a query-key dot product.
Graph attention \citep{velickovic2018gat} scores neighbors in a graph. Selective state-space
models such as Mamba \citep{gu2023mamba} have no explicit attention, but their selective scan
unrolls into a data-controlled operator of the same form, the hidden attention of
\citet{ali2024hidden}. RWKV \citep{peng2023rwkv} mixes over time with a decayed softmax.
Attention residuals \citep{kimi2026attnres} route over depth rather than tokens. We treat all
of these as content-based routers and ask what their shared metric does.

\paragraph{Oversmoothing, the graph-specific account.}
Graph neural networks carry their own collapse story: repeated neighborhood aggregation washes
node features toward a common value, independent of any learned attention
\citep{li2018deeper,oono2020graph}. Because oversmoothing reaches low rank by a route that is
not our mechanism, the graph case needs a control that separates learned-attention collapse
from generic smoothing, which Section~\ref{sec:crossarch} supplies.

\paragraph{The metric view.}
Our motivation rests on reading softmax as a maximum-entropy, or Boltzmann, weighting
\citep{jaynes1957}, which makes the underlying similarity metric explicit and exposes its
blindness to key magnitude. Section~\ref{sec:motivation} develops this identity; the rest of
the paper tests what it predicts.

%% file: sections/03_motivation.tex
\section{Attention as distance-based routing}
\label{sec:motivation}

This section makes precise the sense in which standard attention is routing over a
fixed geometry, and isolates the one assumption in that view with empirical content.
The result is an identity, not a contribution; it tells us what to measure.

\paragraph{Content-based routing.}
We call a layer a \emph{content-based router} if it produces each output as a
convex or conic combination of value vectors, with weights computed from a
similarity score between a query and a set of keys. Softmax attention is the
canonical instance, but the definition is deliberately architecture-agnostic:
graph attention scores a node against its neighbors, and a selective state-space
model scores the current position against its own past through an input-dependent
gate (Section~\ref{sec:crossarch}). Each such router carries a fixed
\emph{similarity metric} (the functional form of its score) together with a
learned representation that the metric acts on. The question this paper asks is
what happens when the two are mismatched.

\paragraph{The distance view.}
Fix a query $q$ and keys $\{k_i\}_{i=1}^n$ in $\R^d$. A distance-based router that
maximizes entropy at a fixed expected squared distance from $q$ assigns the Boltzmann
weights
\begin{equation}
\alpha_i \;=\; \frac{\exp\!\big(-\beta\, D(q,k_i)^2\big)}
{\sum_j \exp\!\big(-\beta\, D(q,k_j)^2\big)},
\label{eq:boltzmann}
\end{equation}
which are the unique maximum-entropy weights for that constraint \citep{jaynes1957}.
The Boltzmann form is therefore \emph{derived}, not posited; the KKT derivation is in
Appendix~\ref{app:theory} (Proposition~\ref{prop:maxent}).

\begin{theorem}
\label{thm:identity}
Suppose the keys lie on a smooth manifold whose metric is conformally flat with a
constant conformal factor (A1 through A3), so the geodesic distance is Euclidean up to a
constant, $D(q,k_i)^2 = \Omega^2\|q-k_i\|^2$, and suppose the keys are hyperspherical,
$\|k_i\| = c$ (A4). Then with $2\beta\Omega^2 = 1/\sqrt{d}$ the maximum-entropy weighting
\eqref{eq:boltzmann} equals scaled dot-product attention,
$\alpha_i = \softmax_i\!\big(\inner{q}{k_i}/\sqrt{d}\big)$.
\end{theorem}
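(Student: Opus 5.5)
The plan is a direct computation: substitute the distance hypothesis into the Boltzmann weights \eqref{eq:boltzmann}, expand the squared norm, and discard every term that does not depend on the key index $i$, since such terms cancel between numerator and denominator. The form \eqref{eq:boltzmann} itself is taken as given from Proposition~\ref{prop:maxent}, so all that remains is algebra on the exponent.

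\textbf{Step 1.} By (A1)--(A3) we have $D(q,k_i)^2=\Omega^2\|q-k_i\|^2$ with the same constant $\Omega$ for every $i$. Hence the exponent becomes $-\beta\Omega^2\|q-k_i\|^2$. Expanding gives
\[
-\beta\Omega^2\|q\|^2 \;+\; 2\beta\Omega^2\inner{q}{k_i} \;-\; \beta\Omega^2\|k_i\|^2 .
\]

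\textbf{Step 2.} The factor $\exp(-\beta\Omega^2\|q\|^2)$ is independent of $i$, so it appears in every term of the sum and cancels from the ratio. By (A4), $\|k_i\|^2=c^2$ for all $i$, so $\exp(-\beta\Omega^2c^2)$ also cancels. This step is exactly where the $-\|k\|^2$ term disappears. What remains is
\[
\alpha_i=\frac{\exp\big(2\beta\Omega^2\inner{q}{k_i}\big)}{\sum_j\exp\big(2\beta\Omega^2\inner{q}{k_j}\big)} .
\]

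\textbf{Step 3.} Substituting $2\beta\Omega^2=1/\sqrt d$ turns this into $\softmax_i(\inner{q}{k_i}/\sqrt d)$, which is the claimed formula.

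\textbf{Main obstacle.} The computation itself is routine, so the only delicate point is the interpretation of the hypotheses. The constant $\Omega$ must be the same for every key; otherwise the $\|q\|^2$ term would no longer cancel. This is guaranteed by requiring the conformal factor to be constant, rather than merely conformally flat, so that geodesics are straight lines and the distance scales uniformly. I would state this explicitly when invoking (A1)--(A3).

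It is also worth remarking that (A4) is used only to cancel $\|k_i\|^2$, which is precisely the norm-blindness the identity is meant to expose. Finally, the choice $2\beta\Omega^2=1/\sqrt d$ is always achievable by taking $\beta=1/(2\Omega^2\sqrt d)>0$, which is a legitimate inverse temperature in Proposition~\ref{prop:maxent}.
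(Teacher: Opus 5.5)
Your proposal is correct and follows the paper's proof essentially step for step. You expand $\|q-k_i\|^2$, cancel the index-independent factor $\exp(-\beta\Omega^2\|q\|^2)$, use A4 to cancel $\exp(-\beta\Omega^2\|k_i\|^2)$, and set $2\beta\Omega^2 = 1/\sqrt{d}$, just as the paper does.
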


\begin{proof}
Expand $\|q-k_i\|^2 = \|q\|^2 - 2\inner{q}{k_i} + \|k_i\|^2$. Substituting into
\eqref{eq:boltzmann}, the factor $\exp(-\beta\Omega^2\|q\|^2)$ is constant in $i$
and cancels in the normalization. The factor $\exp(-\beta\Omega^2\|k_i\|^2)$ is
constant in $i$ precisely because $\|k_i\|=c$, and likewise cancels. What remains
is $\alpha_i \propto \exp(2\beta\Omega^2\inner{q}{k_i})$; setting
$2\beta\Omega^2 = 1/\sqrt{d}$ gives the claim.
\end{proof}

\paragraph{What is actually assumed.}
Of the four assumptions, only A4 carries empirical weight. A1 through A3 are a geometric lens:
they flatten and uniformly scale the metric so $D$ is Euclidean up to a constant, and we
use them to name the metric the router commits to, not to claim that a trained network
performs geometry on a manifold. A4, that the keys are hyperspherical, is exactly what
cancels the $\|k_i\|^2$ term: standard attention scores with the bare inner product
$\inner{q}{k_i}$, the squared-distance score with that term removed, so removing it assumes
key norm does not vary. In this precise sense the dot-product score is \emph{blind to key
magnitude}.\ifarxiv{} Appendix~\ref{app:theory} states the assumptions, derives the Boltzmann form, and proves the identity in full.\fi

The theorem gives sufficiency, not necessity. Softmax attention is
\emph{representable as} flat-Euclidean Boltzmann routing with norm-uniform keys,
but it does not follow that a transformer assumes a manifold or performs
geometry, and we make no such claim. What is falsifiable is narrow: if key norms
are not constant, the dot-product score differs from the distance-based score by
the omitted, position-varying quantity $\|k_i\|^2$.

\paragraph{The inversion, and the prediction it yields.}
Read in reverse, Theorem~\ref{thm:identity} says standard attention commits to a flat
metric that ignores key magnitude, and whether that metric is adequate is an empirical
question about the representations the network learns. If the keys concentrate on a
low-dimensional, curved subset of $\R^d$, the flat metric over-counts directions the data
does not use and the norm-blindness discards magnitude the task may need. A router cannot
change its metric, but it can change the representation the metric sees. The hypothesis of
this paper is that it does, in three measurable forms: collapse toward a low-dimensional
subset, concentration of routing onto a few positions, and the use of norm as a control
signal. Sections~\ref{sec:transformers} and~\ref{sec:crossarch} test this; the key-norm
assumption is the first thing we check, and it fails in every layer of every model.

\paragraph{A correction outside the bilinear family.}
The diagnosis also names its own intervention: restoring the dropped term gives a
learned-metric score $-(q-k)^\top M (q-k)$ whose query-independent penalty $-k^\top M k$
no reparameterization of the query and key projections can reproduce, since a bilinear
score $q^\top A k$ vanishes at $q=0$ while this term does not. It is a new degree of
freedom, not a re-weighting of the existing one (Proposition~\ref{prop:bilinear}). Whether intervening on the metric
reduces the compensation we document is left to companion work; we note it only to show
that the geometric view points to interventions as well as measurements.

%% file: sections/04_transformers.tex
\section{The compensation signature in transformers}
\label{sec:transformers}

We measure the three quantities named in Section~\ref{sec:motivation} across nine
pretrained transformers: GPT-2 small, medium, large, and XL, and Pythia 160M,
410M, 1B, 1.4B, and 2.8B. All measurements use 150 sequences of length 128 from
the WikiText-103 validation split, in fp32, with keys taken before and after the
rotary transform for the Pythia models. Table~\ref{tab:transformer-summary}
summarizes the run; the three subsections below read it column by column.

\input{tables/tab_transformer_summary}

\subsection{Key norms are not uniform}
The one assumption with empirical content in Theorem~\ref{thm:identity} is that
keys share a common norm. They do not. The depth-averaged coefficient of
variation of key norms (column norm-CV) is well above zero in every model, and
the smallest per-layer value we observe anywhere is $0.114$, above the largest
isotropic-Gaussian baseline of $1/\sqrt{2d_h} = 0.088$. The violation is not
confined to a few layers or heads: it holds in every layer of every model. Key
norm carries information, and the dot-product score discards it.

\subsection{Key geometry is low-rank}
The keys also occupy far fewer effective dimensions than the space they live in.
Column VE@8 reports the rank-8 variance explained of the double-centered key
distance matrix, averaged over depth. To judge whether this is remarkable we
compare against two null baselines. The Gaussian null draws i.i.d.\ keys with the
observed mean and variance; the shuffle null permutes token positions within each
sequence, preserving the key-vector marginal distribution while destroying
token-order structure. Real keys clear both nulls in all nine models, with the depth
profile and the decomposition shown in Appendix~\ref{app:results},
Figure~\ref{fig:collapse-nulls}.

The Gaussian null's value is not arbitrary. The double-centered squared-distance matrix has
the same nonzero spectrum (up to a factor) as the centered Gram matrix, which for i.i.d.\
Gaussian keys is Wishart, so its limiting VE@8 follows the Marchenko-Pastur law and depends
only on the aspect ratio $c = n/d_h$ for $n=128$ tokens, not on the scale of the keys. This
shows in the data: the null is identical at $0.294$ across the six models with $d_h=64$, then
falls as $d_h$ grows, to $0.262$, $0.212$, and $0.160$ at $d_h = 80, 128, 256$. It is a
property of dimensionality, which keeps the low-rank claim honest: VE@8 describes the learned
key cloud under the Euclidean metric, not the curvature of any latent manifold.

The shuffle null is the more informative comparison.
We split the rank-8 collapse into a marginal
effect (shuffle minus Gauss), which the key-vector distribution alone produces,
and a content effect (real minus shuffle), which requires token-order structure.
The content effect grows with scale in both families, from $0.260$ to $0.322$
across GPT-2 and from $0.066$ to $0.214$ across Pythia. The smallest model,
Pythia 160M, is the cautionary case: almost all of its apparent collapse
($0.819$ of $0.885$) is reproduced by the shuffle null, so its low rank is mostly
a property of the key marginals rather than learned token structure. This caveat
disappears at scale.

\subsection{Routing concentrates early, but completes at no fixed depth}
Attention concentration sets in early. We fix the onset threshold at $0.20$ across
architectures, well above the uniform share $1/128$ and the nulls but below the peak of
even the weakest router (the recurrent models of Section~\ref{sec:crossarch} peak near
$0.22$ to $0.31$): a single threshold that every router crosses is what makes onset depth
comparable, and a higher bar would leave the weakest with none to report. The onset column
reports the first layer whose mean max-attention-share exceeds this threshold; across
the nine transformers it lands between 9\% and 25\% of depth. There is no single onset
depth, and no fixed completion depth either.

What varies is the completion dynamics. The Pythia models lock in quickly after
onset and do so earlier as they scale, with Pythia 2.8B crossing at 9\% depth.
The larger GPT-2 models do the opposite: GPT-2 XL first exceeds the $0.20$ onset
at 17\% depth but does not pass a $0.50$ share until 60\% depth, and both GPT-2
large and XL drop their concentration sharply in the final layer as routing gives
way to the readout. The signature is the same across the family; its timing is
not.

\paragraph{On the correlation between norm and concentration.}
The last column reports the Pearson correlation between key-norm CV and
max-attention-share across (layer, head) pairs. An earlier draft reported this correlation
reversing sign with scale and called it the sharpest finding here; it does not replicate.
In the standardized nine-model set every correlation is positive, and only the magnitude
changes: in GPT-2 it is strong and grows with scale ($0.37$ to $0.70$); in Pythia it is weak
and decays toward zero, not significant at 410M. The strength varies, the sign does not.

%% file: tables/tab_transformer_summary.tex
\begin{table}[t]
\centering
\small
\begin{tabular}{lrrrrrrrr}
\toprule
Model & $L$ & $d_h$ & norm-CV & VE@8 & Gauss & shuffle & onset & $r_{\mathrm{cv,sh}}$ \\
\midrule
GPT-2 small & 12 & 64 & 0.151 & 0.774 & 0.294 & 0.514 & L2 (17\%) & 0.37 \\
GPT-2 medium & 24 & 64 & 0.210 & 0.758 & 0.294 & 0.471 & L4 (17\%) & 0.45 \\
GPT-2 large & 36 & 64 & 0.249 & 0.732 & 0.294 & 0.422 & L8 (22\%) & 0.59 \\
GPT-2 XL & 48 & 64 & 0.307 & 0.737 & 0.294 & 0.415 & L8 (17\%) & 0.70 \\
\midrule
Pythia 160M & 12 & 64 & 0.258 & 0.885 & 0.294 & 0.819 & L2 (17\%) & 0.28 \\
Pythia 410M & 24 & 64 & 0.260 & 0.783 & 0.294 & 0.636 & L6 (25\%) & 0.05$^{\dagger}$ \\
Pythia 1B & 16 & 256 & 0.254 & 0.572 & 0.160 & 0.370 & L4 (25\%) & 0.18 \\
Pythia 1.4B & 24 & 128 & 0.289 & 0.733 & 0.212 & 0.554 & L3 (12\%) & 0.19 \\
Pythia 2.8B & 32 & 80 & 0.286 & 0.753 & 0.262 & 0.539 & L3 (9\%) & 0.07 \\
\bottomrule
\end{tabular}
\caption{Per-model summary on WikiText-103 ($N{=}150$, seq.\ len.\ 128). norm-CV is the depth-averaged key-norm coefficient of variation (A4 violation); every value exceeds its isotropic-Gaussian baseline ($1/\sqrt{2d_h}$). VE@8 is the depth-averaged rank-8 variance explained of the key distance matrix (pre-RoPE for Pythia); Gauss and shuffle are the matched null baselines. onset is the first layer whose mean max-attention-share exceeds $0.20$. $r_{\mathrm{cv,sh}}$ is the Pearson correlation between key-norm CV and max-attention-share across (layer,head) pairs ($\dagger$: $p \geq 0.05$). VE@8 values are means over layers.}
\label{tab:transformer-summary}
\end{table}

%% file: sections/05_crossarch.tex
\section{The signature is not specific to attention}
\label{sec:crossarch}

If the compensation signature is a property of content-based routing rather than
of softmax, it should appear in routers that are not transformers. We test four:
graph attention, a selective state-space model (Mamba), a recurrent model (RWKV),
and learned attention residuals (AttnRes). The first three route over tokens, nodes,
or time; the fourth routes over \emph{depth}, which tests whether the signature is
even specific to the token axis. Each scores and aggregates over a set of sources, so
each admits the same measurements, once we recover its routing weights. The signature
appears in all four. One-shot softmax attention reaches substantial concentration early
(9-25\% depth) and strongly (peak 0.57-0.78), while the two recurrent routers onset late
(42-81\%) and stay weak (peak 0.22-0.31), as Table~\ref{tab:crossarch} records and
Figure~\ref{fig:concentration-depth} shows as a depth profile: the transformer band
crosses early and plateaus, the recurrent routers stay flat and hump late. The mechanism
generalizes; the form does not.

\input{tables/tab_crossarch}

\begin{figure}[t]
\centering
\includegraphics[width=0.56\textwidth]{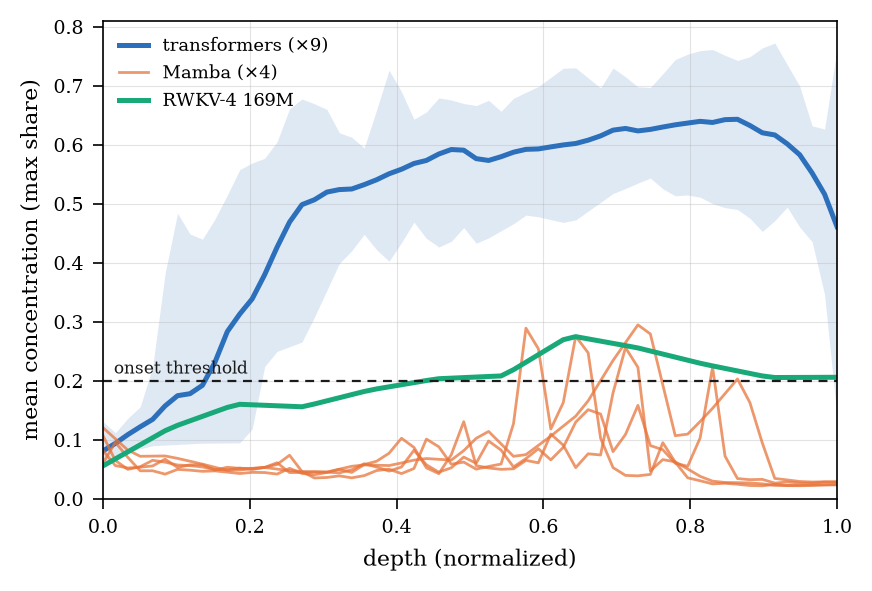}
\caption{Mean routing concentration against normalized depth. The nine transformers
(blue: min-max envelope and mean) cross the threshold near 15\% depth and plateau high;
the four Mamba scales (orange) hump late, 57-82\%; RWKV (green) onsets at 43\%. Transformer
values are post-softmax weights and the recurrent values are reconstructed operators, so read
onset depth and shape, not absolute height across the two.}
\label{fig:concentration-depth}
\end{figure}

For the two recurrent architectures we go further than measurement. Because their
routing weights are reconstructed from internal quantities we can hold fixed or
ablate, we can ask whether the routing mechanism \emph{causes} the signature, a
stronger question than whether it is present. In both cases it does.

\subsection{Graph attention, against an oversmoothing control}
Graph neural networks collapse representations with depth for a reason unrelated
to our hypothesis: oversmoothing, the tendency of repeated neighborhood
aggregation to wash node features toward a shared subspace
\citep{li2018deeper,oono2020graph}. To separate the two, we train a GAT and a
depth- and width-matched GCN that uses fixed symmetric-normalized aggregation
with no learned attention, and compare their rank-4 collapse on three WebKB
heterophilic graphs. The GAT collapses more than the
GCN on all three (VE@4 differential $+0.284$, $+0.183$, $+0.094$; full family in Table~\ref{tab:app-gat}), so the excess
collapse is specific to learned attention, not generic smoothing. We claim the
differential and nothing more: the accuracy comparison does not generalize, since
the GCN outperforms the GAT on Texas. The collapse is attention-specific; its
benefit is not.

\subsection{Mamba: concentration is caused by selectivity}
Mamba has no query-key attention. Its selective state-space layer can be unrolled
into a per-channel data-controlled operator \citep{ali2024hidden} whose causal
weights we reconstruct from the captured time-step, input, and output projections.
The concentration onsets late (56-81\% across four scales from 130M to 1.4B) and
stays weak (peak 0.22-0.31), and it is robust across scale without following a
clean scaling law.

The causal test is an ablation. Mamba's concentration is set by the input-dependent
gate $\Delta$, which controls how strongly each token writes to state. Freezing
$\Delta$ to its per-position mean, so the gate no longer selects, flattens the
concentration entirely: at every layer the real operator concentrates while the
frozen one does not, with a gap of $+0.23$ on average at the peak. Freezing the
input and output projections as well drives concentration to near uniform. So the
concentration is produced by the selectivity mechanism, not by generic recurrence.
The same ablation makes the norm story concrete: the effective key in this
operator is $\Delta$ times the input projection, so $\Delta$ is the per-position
key-norm control, and its coefficient of variation tracks the concentration peak
layer by layer while the static input-projection norm stays flat.

\subsection{RWKV: a second causal ablation, and a sharper temporal mechanism}
RWKV's time-mixing is a softmax over causal sources whose logit is a per-channel
time-decay times the source-query gap, plus the key. It is a recurrent router, and
unlike Mamba it is \emph{normalized}, so its co-late onset (42\% depth, peak 0.28)
also confirms that normalization is not what sets onset. The decay is a single
static parameter per channel, which makes it an explicit temporal knob: we can set
it and re-run the full model, so the change propagates across depth, and ask
whether the across-layer onset moves.

It moves, and the result is twofold. First, killing the carry, by forcing the
decay so steep that only the most recent source survives, removes the concentration
completely: it sits flat at $0.015$ at every layer and never onsets. Carry is
causally necessary for the concentration. Second, and against our own conjecture,
\emph{more} carry brings the onset \emph{earlier}, not later: with no decay at all,
concentration onsets at 17\% depth, versus 50\% for the learned decay. We had expected
temporal accumulation to distribute routing pressure and delay collapse; instead, carry
does not delay the attractor but creates it, and more of it creates more, sooner. The late onset of recurrent routers is real
(Table~\ref{tab:crossarch}) but is not caused by temporal distribution. What the
sweep does establish is the same thing the Mamba ablation establishes, from a
second architecture: the routing mechanism causally produces the concentration.
Figure~\ref{fig:causal-ablations}\ifarxiv\else~(Appendix~\ref{app:results})\fi shows both ablations side by side.
\ifarxiv\input{sections/floats/fig3_causal_ablations}\fi

The three-point ablation is the coarse version of a continuous control. Scaling the
learned decay rate and re-running the model turns the three settings into a sweep
(Figure~\ref{fig:rwkv-sweep}), and two things follow. The concentration is a smooth,
monotone function of carry: peak concentration rises from $0.083$ under heavy decay
to $0.288$ at full carry, and the onset moves from never, through 50\% at the learned
decay, to 17\% at full carry. The three ablation points lie on this one curve.

The decomposition says what the decay is actually doing. At each carry level we score
every source by the routing it receives from the late queries, which all see it so
there is no exposure confound, and correlate that across sources with the source key
norm (content) and with its position (recency). The two trade off. At full carry the
winner is content-picked: influence correlates with key norm at $+0.36$ while the
recency correlation goes \emph{negative}, $-0.09$. At the learned decay the two are
balanced, $+0.17$ content against $+0.20$ position, and adding decay tips the balance
to position, whose correlation climbs toward $+0.43$. The decay is a positional brake
on a content-based attractor, and the trained model sits near the hinge. This is the
temporal mechanism the refuted conjecture was reaching for: not that memory delays
collapse, but that the decay sets how much the content metric expresses itself against
a recency prior. It places RWKV's onset on the same axis as
Theorem~\ref{thm:identity}, where concentration is what content routing does and the
positional term is the contingent brake, which Section~\ref{sec:hypothesis} takes up.

\subsection{AttnRes: the signature is not specific to the token axis}
The three routers above all route over tokens, nodes, or time. Attention residuals
\citep{kimi2026attnres} route over \emph{depth}: in place of the additive residual
$h_\ell = h_{\ell-1} + f(h_{\ell-1})$, each sublayer attends by softmax over the
previous block representations, scoring them with a learned query against
RMSNorm-normalized keys plus a recency bias on the current block. We measure an open
0.6B AttnRes model, a Qwen3 variant trained from scratch with zero-initialized queries,
so the routing starts uniform and any concentration is learned.

The depth routing concentrates. Averaged over its 56 sublayers the top source takes
$0.643$ of the routing weight against a uniform baseline of $0.245$, and the
concentration holds across depth (Figure~\ref{fig:attnres}, left). It piles onto two
hubs: the current block (recency, $0.42$) and the token embeddings (source 0, $0.26$).
The second is a depth-axis analog of the first-token attention sink, a fixed early
representation that downstream sublayers route back to.

This router sharpens one claim the others cannot. Its keys are RMSNorm-normalized, so
their norm coefficient of variation is zero by construction, and it concentrates anyway.
The norm stratification we measure in every transformer (Section~\ref{sec:transformers})
is therefore not necessary for the attractor; the concentration is a property of the
content direction, which normalization leaves untouched. Norm stratification is one
compensation a router can use, not the mechanism that makes it concentrate.

The recency bias gives a second causal handle. In this checkpoint the learned bias is
exactly zero, so the recency is itself content-driven, and we intervene by adding an
offset (Figure~\ref{fig:attnres}, right). Forcing the bias up routes everything to the
current block, the additive-residual limit, with the top share at $0.998$. Suppressing
it does not dissolve the concentration: with the current block driven to $0.001$ of the
weight, the top share holds at $0.677$, above its trained value, and the weight migrates
to the embedding hub, which rises to $0.339$. As in RWKV, the positional brake does not
create the attractor; removing it relocates the attractor from the recent source to a
content hub. An architecture built to route flexibly over depth, initialized to spread
uniformly, still collapses onto a few content-picked sources. The query here is a fixed
learned probe rather than a per-token query and the source set is small, so we read this
as the signature appearing on a new axis, not as a quantitative match to the token-axis
numbers.

\medskip
Across four architectures the signature is present, and in the two where we can
intervene it is caused by the routing mechanism rather than by incidental
dynamics. What differs is the form: onset depth, concentration strength, and the
particular subspace, the last of which a retraining control places in the appendix
(Appendix~\ref{app:results}). Section~\ref{sec:hypothesis} states the resulting
hypothesis and marks the line between what is invariant and what is not.

%% file: tables/tab_crossarch.tex
\begin{table}[t]
\centering
\small
\begin{tabular}{llrr}
\toprule
Architecture & routing & onset (depth) & peak conc. \\
\midrule
Transformers ($\times$9) & softmax, one-shot & 9-25\% & 0.57-0.78 \\
Mamba ($\times$4) & selective SSM, recurrent & 56-81\% & 0.23-0.31 \\
RWKV-4 169M & recurrent, normalized & 42\% & 0.28 \\
\bottomrule
\end{tabular}
\caption{Onset (first layer whose mean concentration exceeds 0.20) and peak concentration across architectures. One-shot softmax attention reaches substantial concentration early and strongly; the recurrent routers onset late and stay weak. (GAT omitted: its concentration is bounded by graph sparsity, not depth.) Transformer values use post-softmax weights; recurrent values use the reconstructed hidden-attention operator, so compare onset depths and relative strength, not absolute magnitudes across the two.}
\label{tab:crossarch}
\end{table}

%% file: sections/floats/fig3_causal_ablations.tex
\begin{figure}[htbp]
\centering
\includegraphics[width=\textwidth]{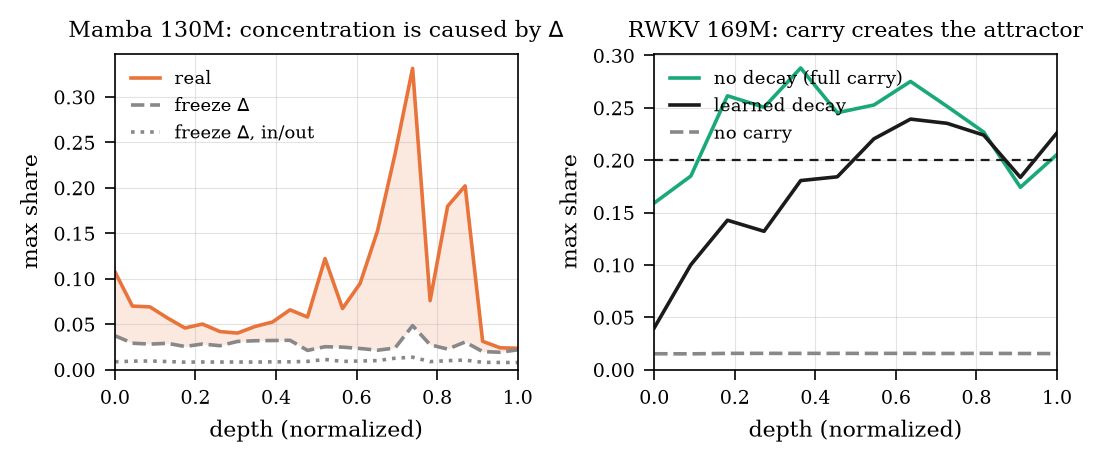}
\caption{The two within-model causal ablations. Left, Mamba: freezing the selective gate
$\Delta$ to its per-position mean (dashed) collapses the concentration the real operator
(solid) develops, and freezing the input and output projections too (dotted) drives it to
near uniform. Right, RWKV: killing the carry (dashed) holds concentration flat at $0.015$,
while maximal carry (green) brings the onset earlier than the learned decay (black). More
carry produces more concentration, sooner.}
\label{fig:causal-ablations}
\end{figure}

%% file: sections/06_hypothesis.tex
\section{The hypothesis: what is invariant and what is not}
\label{sec:hypothesis}

The evidence assembled so far points to a single mechanism with an
architecture-dependent form. We state it as a hypothesis.

\emph{Content-based routing under a fixed, norm-blind similarity metric develops a
representation attractor: the routing concentrates, the routed representations
collapse to a low-rank subspace, and their norms stratify. The mechanism is a
property of the routing, not of any one architecture or of the axis it routes over.}

The hypothesis separates an invariant from a contingent part. The invariant is the mechanism:
across the routers we tested (softmax attention over tokens, graph attention over nodes, a
selective state space and a recurrent mixer over time, and learned residuals over depth) the
same signature appears, and where we can intervene it is causally produced by the routing
rather than by incidental dynamics. The signature needs neither query-key attention (Mamba has
none), nor softmax normalization (Mamba is unnormalized, while normalized RWKV and AttnRes
concentrate anyway), nor the token axis (AttnRes routes over depth). The contingent part is the
form: the onset depth, the concentration strength, the particular subspace, and which source
becomes the hub, which vary across architectures and seeds and are set by architecture-specific
factors rather than by the mechanism.

\paragraph{The positional brake sets the form.}
The chief contingent factor is the strength of the positional term that every router
carries alongside its content score. Softmax attention pairs the content score with a
rotary phase, RWKV pairs it with a time-decay, and AttnRes pairs it with a recency
bias. Each is a brake that pulls routing toward recent or local sources and away from
the content-picked winner. Where the brake is weak the content attractor expresses
early and strongly; where it is strong the attractor expresses late and weakly, and we
can move the brake and watch the form move with it. Scaling RWKV's decay sweeps the
onset across the whole range, from 17\% depth at full carry to never under heavy decay,
and the late-query routing shifts from content-picked to recency-picked as the brake
tightens (Figure~\ref{fig:rwkv-sweep}). Suppressing AttnRes's recency bias does not
dissolve its concentration; it relocates it from the current block to content hubs. The
cross-architecture onset gap (transformers early, recurrent routers late) is therefore
the brake, not the mechanism: the recurrent routers carry an explicit decay, while
attention carries only the weak, non-monotone brake of its rotary phase.

\paragraph{Norm stratification is one compensation, not the mechanism.}
Section~\ref{sec:motivation} identifies the metric's blindness to key magnitude as the
inadequacy the routing must work around, and Section~\ref{sec:transformers} finds the
expected norm stratification in every transformer. But stratification is not necessary
for the attractor. AttnRes normalizes its keys, so their norm coefficient of variation
is zero by construction, and the routing concentrates anyway, onto content-picked hubs.
Norm stratification is one way a router can exploit a norm-blind metric, not the thing
that makes it concentrate; the concentration is a property of the content direction,
which normalization leaves untouched. This is why we read the geometry as a diagnosis of
the metric rather than as the disease itself.

\paragraph{What we do not claim.}
We do not claim these models implement Riemannian geometry: the geometric language of
Section~\ref{sec:motivation} is a diagnostic for the metric's inadequacy. Nor that the
attractor is always a pathology, since in the graph case a control without learned attention
sometimes generalizes better (Section~\ref{sec:crossarch}), so the collapse is specific to
learned attention but its cost is not universal. And we claim sufficiency, not necessity, since
oversmoothing reaches low rank by a different route. The hypothesis is falsifiable: a norm-blind
content router that did not develop the signature would refute the invariant claim, and a
demonstration that the concentration in our interventions comes from something other than the
routing, whether a token artifact, a normalization quirk, or a training-data regularity, would
refute the causal claim, which the Mamba, RWKV, GCN, and seed controls are built to close. The
brake account also predicts an ordering we have only partly tested, which
Section~\ref{sec:discussion} takes up.

%% file: sections/07_discussion.tex
\section{Discussion}
\label{sec:discussion}

\paragraph{A brake spectrum inside the transformer family.}
The account in Section~\ref{sec:hypothesis} makes an untested prediction the transformer family is
positioned to answer: every positional scheme is a brake of a different strength. With no positional
encoding there is no brake; RoPE \citep{su2021roformer} rotates queries and keys by position, a brake
whose distance decay is weak and non-monotone; ALiBi \citep{press2022alibi} adds a linear penalty
$-m\,|i-j|$, the same object as RWKV's decay times gap, an explicit and strong brake. Onset should
then order these from earliest to latest as no-encoding, RoPE, ALiBi: the weaker the brake, the
earlier and stronger the attractor. This is the transformer-internal version of the RWKV sweep in
Section~\ref{sec:crossarch}, directly testable on pretrained models.

\paragraph{The brake and the metric are separate axes.}
RoPE also clarifies what the positional term does not touch. A rotation preserves key norm, so it
leaves the metric's blindness to magnitude exactly where it was and changes only which source wins
by relative position. This is why we measure the Pythia key geometry before the rotary transform:
the pre-rotary keys are the content cloud the metric scores, and the rotation overlays position on
top. Across the five Pythia models it reduces the key variance explained, sharply at rank one,
where it roughly halves the single-direction dominance, and only slightly at rank eight
(Figure~\ref{fig:app-rope}): it spreads the geometry and resists the collapse without preventing
it: the post-rotary rank-eight VE stays far above the nulls. And since a rotation preserves norm,
the norm-CV is identical before and after, so RoPE cannot touch the blindness at all. The metric's
inadequacy and the positional brake are independent, and RoPE sits cleanly on the brake.

\paragraph{Convergent evidence.}
Three independent results corroborate the picture: attention residuals show the same depth-axis
sinks and a mechanism for the norm stratification, projection-sharing work finds the value
projection nearly redundant with the key and derives a collapse of linear attention into a
state-space recurrence, and the memory-caching view recovers attention as a cached recurrence.
Appendix~\ref{app:convergent} discusses these; none is load-bearing; our claim rests on the
measurements and the ablations.

\paragraph{Why low-rank adaptation may suffice.}
The collapse offers a routing-side reading of low-rank adaptation. If content routing has already
pressed the usable key geometry into a low-dimensional subspace, re-aiming that routing should need
only a low-rank correction, which is what LoRA supplies \citep{hu2022lora}, alongside the low
intrinsic dimensionality already observed for fine-tuning \citep{aghajanyan2021intrinsic}, now with
a reason on the representation side rather than the optimization side. We do not test it: our
low-rankness is in the forward-pass key geometry while adaptation acts in weight space, so this is
a connection, not a result.

\paragraph{Limitations.}
The models are small to mid-scale, and the signature is robust across the scales we test without
following a clean scaling law. The AttnRes query is a fixed learned probe and its source set is
small, so the depth-axis result establishes the signature on a new axis but not a quantitative
match to the token-axis numbers. The causal interventions reach out-of-distribution settings at
their extremes by design, so they bound what a parameter controls, not the regime the trained
model occupies. And the geometric framing is a diagnosis, not an implementation claim.

%% file: sections/08_conclusion.tex
\section{Conclusion}
\label{sec:conclusion}

Attention sinks, representation collapse, and norm stratification are not specific to
attention. We gave a reframing identity exposing the flat, norm-blind metric in softmax
routing, and predicted that any router sharing it should develop the same signature. Across
four architecturally distinct routers over tokens, time, and depth, and an oversmoothing
control on graphs, the signature appears, and where we can intervene it is caused by the
routing, not incidental dynamics. What varies is the form: onset depth, strength, and
subspace, set mainly by the positional brake each router carries. What does not vary is the
mechanism, and the norm story is one compensation among several, not the cause: a
norm-normalized router concentrates just the same. The routes differ; the destination does not.

\ifarxiv
Three handles remain for future work. The brake account predicts an onset ordering across
position encodings, from none through RoPE to ALiBi, testable on existing models. The expert
axis of mixture-of-experts routing \citep{shazeer2017moe} is a fifth place the mechanism should
appear, where the load-balancing losses already in use act as an imposed brake against it. And
whether the collapse should be corrected or exploited is left open: in one of our settings a
router without learned attention generalizes better, so the attractor is specific to learned
routing while its cost is not universal.
\fi

%% file: sections/appendix_theory.tex
\section{The reframing identity: assumptions and proofs}
\label{app:theory}

This appendix states the four assumptions behind Theorem~\ref{thm:identity}, derives the
Boltzmann weighting from a maximum-entropy principle, proves the identity, and proves the claim of
Section~\ref{sec:motivation} that the metric correction lies outside the bilinear, query-key
family.

\subsection{The four assumptions}
The identity reads standard attention as Boltzmann routing over a flat, norm-uniform key geometry,
and it holds under the four assumptions of Table~\ref{tab:assumptions}. A1 through A3 are a
geometric lens: they make the metric explicit and flat, and we use them to name the metric the
router commits to, not to claim that a trained network performs geometry on a manifold. A4 is the
assumption with empirical content, and Section~\ref{sec:transformers} finds it violated in every
layer of every model, which is the precise sense in which the dot-product score is blind to key
magnitude.

\begin{table}[htbp]
\centering
\caption{The four assumptions behind the reframing identity (Theorem~\ref{thm:identity}). A1
through A3 are the geometric lens, the conditions under which the geodesic distance is Euclidean up
to a constant; A4 is the only assumption with empirical content, and it is violated.}
\label{tab:assumptions}
\begin{tabular}{@{}clll@{}}
\toprule
 & Assumption & Geometric meaning & Status \\
\midrule
A1 & smooth manifold & keys carry a local geometry & lens \\
A2 & conformally flat metric & $g = e^{2\phi}I$, no directional bias & lens \\
A3 & constant conformal factor & $g = \Omega^2 I$, straight geodesics & lens \\
A4 & hyperspherical keys & $\lVert k_i\rVert = c$, keys on a sphere & \emph{empirical, violated} \\
\bottomrule
\end{tabular}
\end{table}

\subsection{Boltzmann weighting from maximum entropy}
The Boltzmann form is derived, not posited. A distance-based router that commits to the least
biased weighting at a fixed expected distance maximizes entropy subject to that constraint, and the
solution is unique.

\begin{proposition}
\label{prop:maxent}
Among all weightings $\alpha$ on the simplex $\Delta^{n-1}$, the maximizer of the Shannon entropy
$H(\alpha) = -\sum_i \alpha_i \log \alpha_i$ subject to a fixed expected squared distance
$\sum_i \alpha_i D(q,k_i)^2 = \bar{D}$ is the Boltzmann distribution
$\alpha_i \propto \exp(-\beta\, D(q,k_i)^2)$, with $\beta$ the multiplier of the constraint.
\end{proposition}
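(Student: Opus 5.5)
The plan is a Lagrangian (KKT) argument on the simplex, followed by a strict-concavity argument for uniqueness. Write $d_i = D(q,k_i)^2$. The problem is to maximize $H(\alpha)$ subject to:
\begin{itemize}
\item the normalization $\sum_i \alpha_i = 1$;
\item the moment constraint $\sum_i \alpha_i d_i = \bar{D}$;
\item the nonnegativity constraints $\alpha_i \ge 0$.
\end{itemize}
I form the Lagrangian
\[
\mathcal{L}(\alpha,\lambda,\beta) = -\sum_i \alpha_i\log\alpha_i - \lambda\Big(\sum_i\alpha_i - 1\Big) - \beta\Big(\sum_i \alpha_i d_i - \bar{D}\Big),
\]
at first leaving out the multipliers for nonnegativity. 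Setting $\partial\mathcal{L}/\partial\alpha_i = 0$ gives $-\log\alpha_i - 1 - \lambda - \beta d_i = 0$, so $\alpha_i = e^{-1-\lambda}e^{-\beta d_i}$. The normalization constraint then fixes $e^{-1-\lambda} = 1/\sum_j e^{-\beta d_j}$, which is exactly the Boltzmann form \eqref{eq:boltzmann}. The sign convention makes $\beta$ the multiplier of the distance constraint, as the statement requires.

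Next I justify dropping the inequality constraints. Since $\partial H/\partial\alpha_i = -\log\alpha_i - 1 \to +\infty$ as $\alpha_i \to 0^+$, no maximizer can sit on the boundary of the simplex when an interior feasible point exists. Concretely, moving mass toward a zero coordinate along a feasible direction increases $H$ to first order. So the nonnegativity multipliers vanish by complementary slackness, and the stationary point above is the KKT point.

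For uniqueness and optimality I use concavity. $H$ is strictly concave on the simplex because its Hessian is $-\mathrm{diag}(1/\alpha_i)$. The feasible set is the simplex cut by an affine hyperplane, so it is convex. A strictly concave function on a convex set has at most one maximizer, and any KKT point of a concave program with affine constraints is a global maximizer. This gives both optimality and uniqueness at once.

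An alternative route, which I would write as a one-line check, is the Gibbs inequality. For any feasible $\alpha$ and the Boltzmann $p$,
\[
H(p) - H(\alpha) = \mathrm{KL}(\alpha\|p) \ge 0,
\]
using $\sum_i \alpha_i d_i = \sum_i p_i d_i = \bar{D}$. Equality holds iff $\alpha = p$. This avoids any need to discuss boundary behavior.

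The main obstacle is existence: showing that some $\beta$ realizes the prescribed $\bar{D}$. I handle it by noting that $m(\beta) = \sum_i p_i(\beta) d_i$ is continuous, with $m'(\beta) = -\mathrm{Var}_{p(\beta)}(d) \le 0$. As $\beta$ ranges over $\mathbb{R}$, $m(\beta)$ sweeps the open interval $(\min_i d_i, \max_i d_i)$ whenever the $d_i$ are not all equal. So for $\bar{D}$ strictly inside that interval, the intermediate value theorem gives a unique $\beta$. I would state this range condition as the implicit hypothesis of the proposition, noting two degenerate cases:
\begin{itemize}
\item if all $d_i$ are equal, the constraint is vacuous and the answer is uniform, which is the case $\beta = 0$;
\item at the endpoints $\bar{D} = \min_i d_i$ or $\bar{D} = \max_i d_i$, the solution is the limit $\beta \to \pm\infty$, concentrated on the extremal keys.
\end{itemize}
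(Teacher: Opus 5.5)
Your proof is correct and takes the same route as the paper's. Stationarity of the Lagrangian yields $\alpha_i \propto \exp(-\beta D(q,k_i)^2)$, and strict concavity of $H$ on the affinely constrained simplex makes that point the unique global maximizer. Your added steps supply rigor the paper's proof leaves implicit: the boundary argument for the nonnegativity constraints, the Gibbs-inequality check, and the intermediate-value argument, which shows a finite $\beta$ exists exactly when $\bar{D}$ lies strictly between $\min_i D(q,k_i)^2$ and $\max_i D(q,k_i)^2$, a hypothesis the statement tacitly needs.
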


\begin{proof}
The Lagrangian is
\[
\mathcal{L} = -\sum_i \alpha_i \log \alpha_i
- \beta\Big(\textstyle\sum_i \alpha_i D(q,k_i)^2 - \bar{D}\Big)
- \lambda\Big(\textstyle\sum_i \alpha_i - 1\Big).
\]
Stationarity, $\partial\mathcal{L}/\partial\alpha_i = -\log\alpha_i - 1 - \beta\, D(q,k_i)^2 - \lambda = 0$,
gives $\alpha_i \propto \exp(-\beta\, D(q,k_i)^2)$; the normalization constraint fixes the constant.
The entropy is strictly concave and the constraints are linear, so this stationary point is the
unique global maximum.
\end{proof}

\subsection{Proof of the identity}
\begin{proof}[Proof of Theorem~\ref{thm:identity}]
By A1 through A3 the metric is flat with a constant conformal factor, so the geodesic distance is
Euclidean up to a constant, $D(q,k_i)^2 = \Omega^2\lVert q-k_i\rVert^2$. Take the Boltzmann weight
of Proposition~\ref{prop:maxent} with this distance and expand the squared norm,
\[
\lVert q-k_i\rVert^2 = \lVert q\rVert^2 - 2\inner{q}{k_i} + \lVert k_i\rVert^2 .
\]
Substituting into the numerator and denominator of \eqref{eq:boltzmann},
\[
\alpha_i = \frac{\exp\!\big(-\beta\Omega^2\lVert q\rVert^2\big)\,
\exp\!\big(-\beta\Omega^2\lVert k_i\rVert^2\big)\,
\exp\!\big(2\beta\Omega^2\inner{q}{k_i}\big)}
{\sum_j \exp\!\big(-\beta\Omega^2\lVert q\rVert^2\big)\,
\exp\!\big(-\beta\Omega^2\lVert k_j\rVert^2\big)\,
\exp\!\big(2\beta\Omega^2\inner{q}{k_j}\big)} .
\]
The factor $\exp(-\beta\Omega^2\lVert q\rVert^2)$ is independent of the index and cancels. By A4,
$\lVert k_i\rVert = c$ for every $i$, so $\exp(-\beta\Omega^2\lVert k_i\rVert^2) = \exp(-\beta\Omega^2 c^2)$
is also constant in $i$ and cancels. What survives is
\[
\alpha_i = \frac{\exp\!\big(2\beta\Omega^2\inner{q}{k_i}\big)}
{\sum_j \exp\!\big(2\beta\Omega^2\inner{q}{k_j}\big)}
= \softmax_i\!\big(2\beta\Omega^2\inner{q}{k_i}\big).
\]
Setting $2\beta\Omega^2 = 1/\sqrt{d}$ gives $\alpha_i = \softmax_i(\inner{q}{k_i}/\sqrt{d})$, the
scaled dot-product attention weights.
\end{proof}

\subsection{The metric correction is outside the bilinear family}
The diagnosis of Section~\ref{sec:motivation} restores the dropped $\lVert k\rVert^2$ term, giving a
learned-metric score $s_M(q,k) = -(q-k)^\top M (q-k)$. The following makes precise why this is not
a reparameterization of the existing query and key projections.

\begin{proposition}
\label{prop:bilinear}
Let $M \neq 0$ be symmetric. The score $s_M(q,k) = -(q-k)^\top M (q-k)$ cannot be written as a
bilinear form $q^\top A k$ for any matrix $A$. Since standard attention scores bilinearly, with
$q^\top A k$ and $A = W_Q^\top W_K$, no choice of query and key projections reproduces $s_M$.
\end{proposition}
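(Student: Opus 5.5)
The plan is a proof by contradiction that evaluates both sides at a few special points. Suppose $s_M(q,k) = q^\top A k$ for all $q,k \in \R^d$ and some matrix $A$.

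First set $q = 0$. The bilinear side gives $0^\top A k = 0$ for every $k$. The left side is $s_M(0,k) = -k^\top M k$. So the quadratic form of $M$ would vanish identically.

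Because $M$ is symmetric, the quadratic form determines the matrix. Polarization, $k^\top M k' = \tfrac12\big((k+k')^\top M (k+k') - k^\top M k - k'^\top M k'\big)$, then gives $k^\top M k' = 0$ for all $k,k'$. Hence $M = 0$, contradicting $M \neq 0$.

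This is the only step that uses symmetry, and it is genuinely needed. For a nonzero antisymmetric $M$, one has $s_M \equiv 0 = q^\top 0\, k$, so the claim would fail.

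An equivalent route, which I would mention as a check, is to expand directly:
\[
s_M(q,k) = -q^\top M q + 2 q^\top M k - k^\top M k .
\]
A bilinear form has no pure-$q$ or pure-$k$ quadratic part. Setting $k=0$ forces $q^\top M q \equiv 0$, which again gives $M=0$.

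For the second sentence of the proposition, note that any choice of projections gives a score $q^\top W_Q^\top W_K k$, which is bilinear with $A = W_Q^\top W_K$. The first part then rules this out.

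One subtlety needs a remark. In attention, $q$ and $k$ are projections $W_Q x$ and $W_K y$ of token inputs, not arbitrary vectors. The cleanest reading is therefore that the identity is required for all $q,k$ in the ranges of the projections.

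If $W_K$ has full rank the argument goes through unchanged. Otherwise I would restrict to the subspace $V = \mathrm{range}(W_K)$. The $q=0$ evaluation then shows only that $M$ restricted to $V$ vanishes. So the statement should be read with $q,k$ ranging over all of $\R^d$, as written.

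A further point concerns softmax. It is invariant to adding terms that depend only on $q$, so $-q^\top M q$ is harmless. The query-independent term $-k^\top M k$ is the one that survives, and it is what no bilinear score can supply. This matches the paper's remark that a bilinear score vanishes at $q=0$.

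I expect no real obstacle. The only care needed is the polarization step, and stating that symmetry of $M$ is what makes "the quadratic form vanishes" equivalent to "$M = 0$".
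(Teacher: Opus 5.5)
Your proof is correct and is the paper's argument: assume $s_M(q,k) = q^\top A k$, set $q=0$ to get $k^\top M k = 0$ for every $k$, and use symmetry of $M$ to conclude $M=0$. Your polarization step and the antisymmetric counterexample spell out, and show the necessity of, the one-line claim the paper makes that a vanishing quadratic form forces a symmetric $M$ to be zero.
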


\begin{proof}
Expand $s_M(q,k) = -q^\top M q + 2\,q^\top M k - k^\top M k$. Only the cross term $2\,q^\top M k$ is
bilinear; the remaining two are a key-independent term $-q^\top M q$ and a query-independent term
$-k^\top M k$. Suppose for contradiction that $s_M(q,k) = q^\top A k$ for some $A$ and all $q,k$.
Setting $q=0$ gives $-k^\top M k$ on the left and $0$ on the right, so $k^\top M k = 0$ for every
$k$, which forces $M = 0$ for symmetric $M$, contradicting $M \neq 0$. Hence no bilinear score, and
therefore no reparameterization of $W_Q, W_K$, reproduces $s_M$: the correction is a new degree of
freedom rather than a re-weighting of the existing one.
\end{proof}

%% file: sections/appendix_methods.tex
\section{Reconstruction and measurement details}
\label{app:methods}

\subsection{Models and data}
The nine transformers (GPT-2 small/medium/large/XL and Pythia 160M/410M/1B/1.4B/2.8B) are
the pretrained HuggingFace checkpoints. We run 150 sequences of length 128 from the
WikiText-103 validation split in fp32 and read the keys with forward hooks, taking them
before and after the rotary transform for the Pythia models. The graph attention networks
are trained from scratch on three WebKB heterophilic graphs (Cornell, Texas, Wisconsin),
6 layers, hidden width 64, 8 heads, alongside a depth- and width-matched GCN control. The
Mamba models (130M/370M/790M/1.4B) are run through HuggingFace's reference \texttt{slow\_forward}
so that hooks on the time-step and input/output projections expose the per-step internals; RWKV
uses \texttt{rwkv-4-169m-pile}; AttnRes uses the open \texttt{0.6B} block checkpoint (a Qwen3
variant, 28 layers, 8 blocks). For the three reconstructed-operator models we use 16 sequences
of length 128, since the per-channel reconstruction is heavier than reading attention directly.

\subsection{Recovering routing weights}
For softmax attention and graph attention the routing weights are the attention coefficients,
read straight from the forward pass; for GAT the key is the projected source-node feature and
the weight is the softmax over a node's incoming edges, with the GCN control replacing the
learned attention by fixed symmetric-normalized aggregation. The three remaining routers have
no explicit attention, so we reconstruct an attention-like operator from internal quantities.

\paragraph{Mamba.}
The selective state-space recurrence $h_t = \bar{A}_t h_{t-1} + \bar{B}_t x_t$, $y_t = C_t h_t$,
with input-dependent $\bar{A}_t = \exp(\Delta_t A)$ and $\bar{B}_t = \Delta_t B_t$, unrolls into a
per-channel data-controlled operator \citep{ali2024hidden},
\[
y_t = \sum_{s \le t} \alpha_{t,s}\, x_s, \qquad
\alpha_{t,s} = C_t \Big(\textstyle\prod_{r=s+1}^{t} \bar{A}_r\Big) \bar{B}_s .
\]
We capture $\Delta_t, B_t, C_t$ from the time-step and input/output projections and form
$\alpha_{t,s}$. The effective key of source $s$ is $\bar{B}_s = \Delta_s B_s$, so its norm is
$\Delta_s \lVert B_s \rVert$ and $\Delta$ is the per-position key-norm control.

\paragraph{RWKV.}
The WKV time-mixing is a softmax over causal sources. For channel $c$, query $t$, and source
$s < t$ the weight is $\alpha_{t,s} \propto \exp\!\big((t-1-s)\,w_c + k_{s,c}\big)$, with the
current token weighted by $\exp(u_c + k_{t,c})$ and the distribution normalized over $s \le t$.
Here $w_c = -\exp(\texttt{time\_decay}_c) < 0$ is the per-channel decay, $u_c$ is the
current-token bonus, and $k$ is the key projection, all read from the forward pass.

\paragraph{AttnRes.}
Each sublayer routes over the running block representations $V = \{s_1,\dots,s_N,\text{partial}\}$.
With $K = \mathrm{RMSNorm}(V)$ and a single learned per-sublayer query vector $q$, the logits are
$q^\top K$, a recency bias is added to the current block, and the weights are a softmax over the
$N{+}1$ source blocks. Because the keys are norm-normalized, their norm coefficient of variation
is zero by construction. We extract the weights with the repository's hook.

\subsection{Measurements}
\paragraph{Concentration.} For each operator the column sum gives the total weight a source
receives across queries; the max share is the largest source's normalized column sum. Onset is
the first layer whose mean max share exceeds $0.20$ (Section~\ref{sec:transformers}).
\paragraph{Low-rank collapse.} We form the matrix of squared pairwise distances between keys and
double-center it. By the classical-MDS identity the double-centered squared-distance matrix
equals $-2 X_c X_c^\top$ for centered keys $X_c$, so its nonzero spectrum is that of the centered
key Gram matrix; VE@$r$ is the rank-$r$ variance explained from its singular values, averaged
over depth.
\paragraph{Norm stratification.} The norm-CV is the coefficient of variation of the key norms,
compared against the isotropic-Gaussian baseline $1/\sqrt{2 d_h}$.

\subsection{Null baselines}
The Gaussian null draws i.i.d.\ Gaussian keys with the observed mean and variance; the centered
Gram matrix is then a Wishart matrix, whose limiting spectrum, and therefore VE@$r$, follows the
Marchenko-Pastur law and depends only on the aspect ratio $c = n/d_h$ for $n=128$ tokens. The
shuffle null permutes token positions within each sequence, preserving the key marginal
distribution while destroying token-order structure. (For graphs the shuffle null is degenerate,
since permuting which node owns which key leaves the distance matrix unchanged, so we use the GCN
control instead.)

\subsection{Causal interventions}
Three within-model interventions test whether the routing mechanism produces the concentration.
In Mamba we freeze the selective gate $\Delta$ to its per-position mean so it no longer selects,
and optionally freeze the input and output projections as well. In RWKV we scale, or additively
offset, the learned decay and re-run the full model so the change propagates across depth. In
AttnRes we add an offset to the (zero-valued) learned recency bias and re-extract the weights.

%% file: sections/appendix_data.tex
\section{Supplementary results}
\label{app:results}

This appendix collects the per-architecture measurements that the main text summarizes.

\subsection{Low-rank collapse against matched nulls}
Figure~\ref{fig:app-nulls} shows the low-rank collapse of the routing keys clearing a matched
null in every architecture that supplies one, the recurrent routers as well as the transformers.

\begin{figure}[htbp]
\centering
\includegraphics[width=0.95\textwidth]{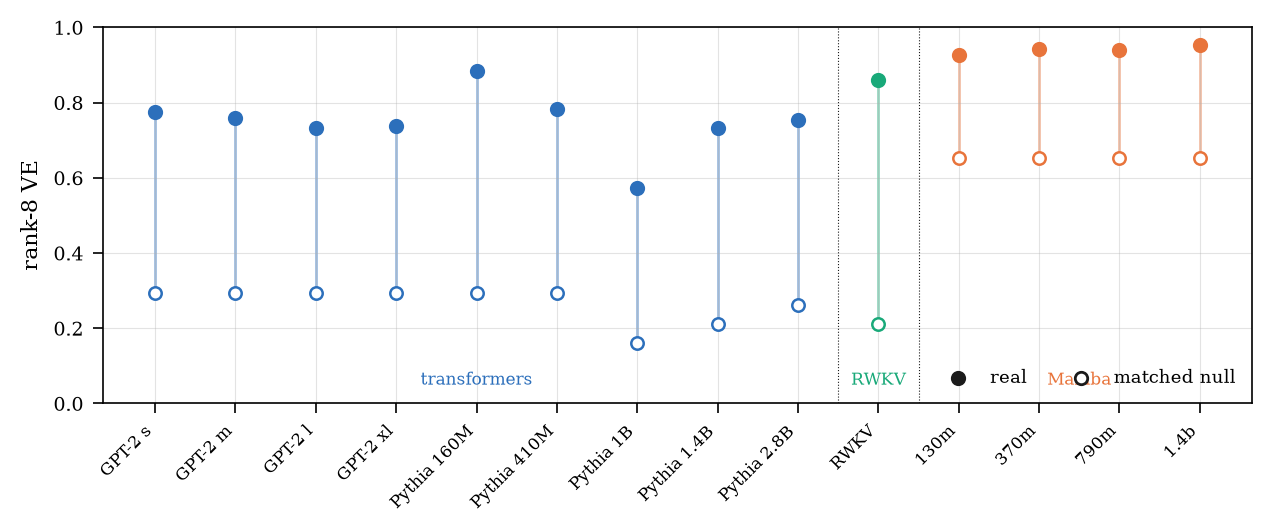}
\caption{Rank-8 variance explained of the routing key geometry (filled) against its matched
null (open), for every architecture that supplies a null: the nine transformers (Gaussian, that
is Marchenko-Pastur, null on the key distance matrix), RWKV (null on the key geometry), and the
four Mamba scales (null on the input matrix $B$). The real value clears the null in all fourteen
cases. The null level differs across architectures because it depends on the dimensionality of
the routed object, so the comparison is within-architecture; GAT is omitted here because its
baseline is the GCN control of Table~\ref{tab:app-gat}, not a Gaussian null.}
\label{fig:app-nulls}
\end{figure}

Figure~\ref{fig:collapse-nulls} resolves the same comparison as a depth profile for three
representative transformers, with the content and marginal effects of
Table~\ref{tab:app-transformer-full} shown as shaded bands.

\begin{figure}[htbp]
\centering
\includegraphics[width=\textwidth]{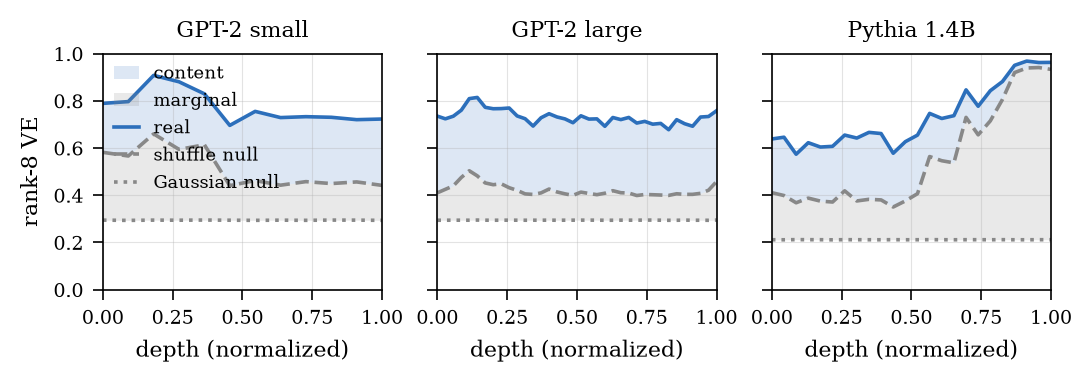}
\caption{Rank-8 variance explained of the double-centered key distance matrix across depth, for
three representative transformers, against the shuffle and Gaussian (Marchenko-Pastur) nulls. Real
keys (solid) clear both nulls at every depth. The shaded bands are the decomposition of
Table~\ref{tab:app-transformer-full}: the content effect (real minus shuffle) above, the marginal
effect (shuffle minus Gauss) below. The content band widens with scale, widest for Pythia~1.4B, and
the Gaussian null sits lower for Pythia~1.4B because its larger head dimension lowers the
Marchenko-Pastur value.}
\label{fig:collapse-nulls}
\end{figure}

\subsection{Full transformer measurements}
Table~\ref{tab:app-transformer-full} gives every per-model transformer metric in one place,
consolidating the two main-text tables and adding the per-layer minimum norm-CV and the
completion depth $L^\ast$.

\input{tables/tab_app_transformer_full}

\subsection{RoPE shifts the geometry, not the norm}
Figure~\ref{fig:app-rope} measures the Pythia key geometry before and after the rotary transform,
the evidence for the separation of brake and metric in Section~\ref{sec:discussion}: the rotation
spreads the key directions, lowering their variance explained, while leaving the key norms untouched.

\begin{figure}[htbp]
\centering
\includegraphics[width=0.72\textwidth]{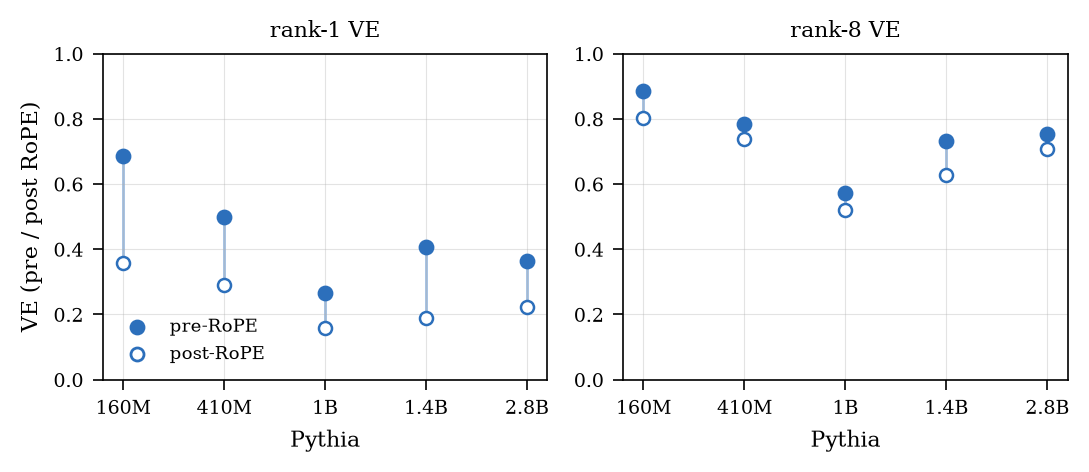}
\caption{Key variance explained before (filled) and after (open) the rotary transform, for the
five Pythia models, at rank 1 and rank 8. The rotation reduces VE everywhere, strongly at rank 1,
where it roughly halves the single-direction dominance, and weakly at rank 8, so it spreads the
keys angularly and resists the collapse without preventing it: the post-rotary rank-8 VE stays far
above the nulls of Figure~\ref{fig:app-nulls}. A rotation preserves norm, so the norm-CV is
identical before and after and is not plotted.}
\label{fig:app-rope}
\end{figure}

\ifarxiv\else
\subsection{The two causal ablations}
Figure~\ref{fig:causal-ablations} shows the Mamba and RWKV within-model ablations of
Section~\ref{sec:crossarch} side by side.

\input{sections/floats/fig3_causal_ablations}
\fi

\subsection{The RWKV decay sweep}
Figure~\ref{fig:rwkv-sweep} reports the peak concentration and the content/position correlation
across the sweep; Figure~\ref{fig:app-rwkv-detail} adds the full depth profile at every decay scale,
where as carry rises the whole curve lifts and its onset moves earlier, from never crossing the
threshold under heavy decay to crossing near 17\% depth at full carry.

\begin{figure}[htbp]
\centering
\includegraphics[width=0.85\textwidth]{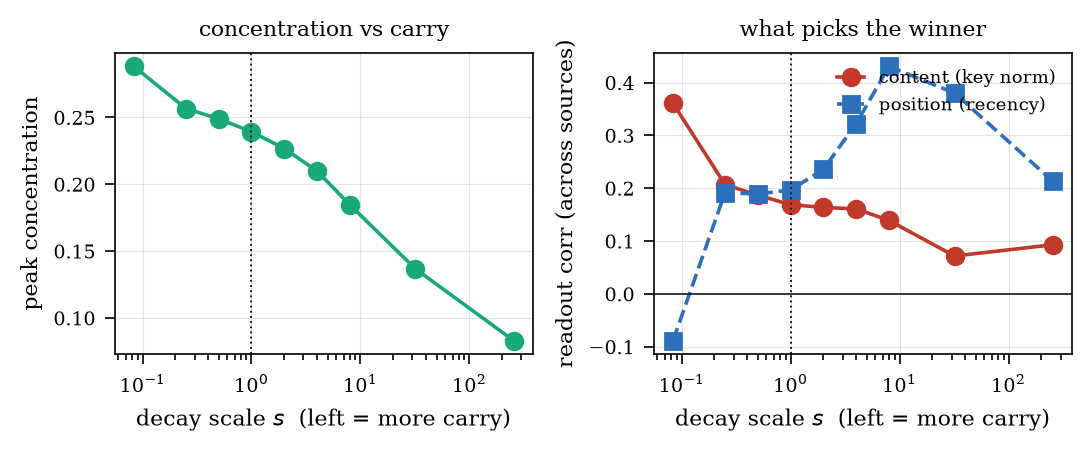}
\caption{Scaling RWKV's learned decay turns the three-point ablation into a continuous sweep
($s=1$ is the learned model, smaller $s$ is more carry). Left: peak concentration is a smooth
monotone function of carry, from $0.08$ under heavy decay to $0.29$ at full carry. Right: at each
carry level, the late-query routing correlated across sources with key norm (content) and with
position (recency). The decay trades the two: at full carry the winner is content-picked (content
$+0.36$, recency $-0.09$), at the learned decay they are balanced, and adding decay tips it to
position. The decay is a positional brake on a content-based attractor, and the trained model sits
near the hinge.}
\label{fig:rwkv-sweep}
\end{figure}

\begin{figure}[htbp]
\centering
\includegraphics[width=0.72\textwidth]{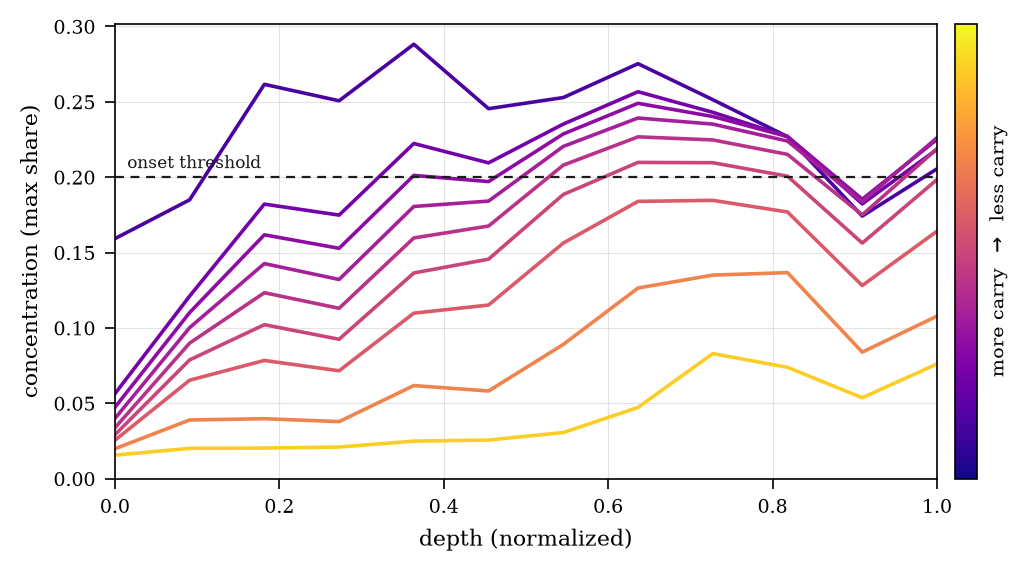}
\caption{RWKV concentration against depth at each of the nine decay scales of
Section~\ref{sec:crossarch} (color: dark is more carry, light is less). The dashed line is the
$0.20$ onset threshold. Each curve is one setting of the swept decay; the family shows the onset
sweeping earlier and the whole profile lifting as carry increases, which the peak-only summary in
Figure~\ref{fig:rwkv-sweep} cannot show.}
\label{fig:app-rwkv-detail}
\end{figure}

\subsection{Per-architecture detail}
Tables~\ref{tab:app-mamba}, \ref{tab:app-gat}, and \ref{tab:app-attnres} give the measurements
behind the cross-architecture summary: the Mamba scale ladder, the GAT-versus-GCN graph family,
and the AttnRes recency-offset sweep.

\input{tables/tab_app_mamba}
\input{tables/tab_app_gat}
\input{tables/tab_app_attnres}

Figure~\ref{fig:app-attnres-profile} resolves the AttnRes hubs by depth: the token embeddings
dominate the early sublayers (the depth sink), the current block dominates the middle, and the
late sublayers route from specific intermediate blocks where recency dips but concentration stays.

\begin{figure}[htbp]
\centering
\includegraphics[width=0.72\textwidth]{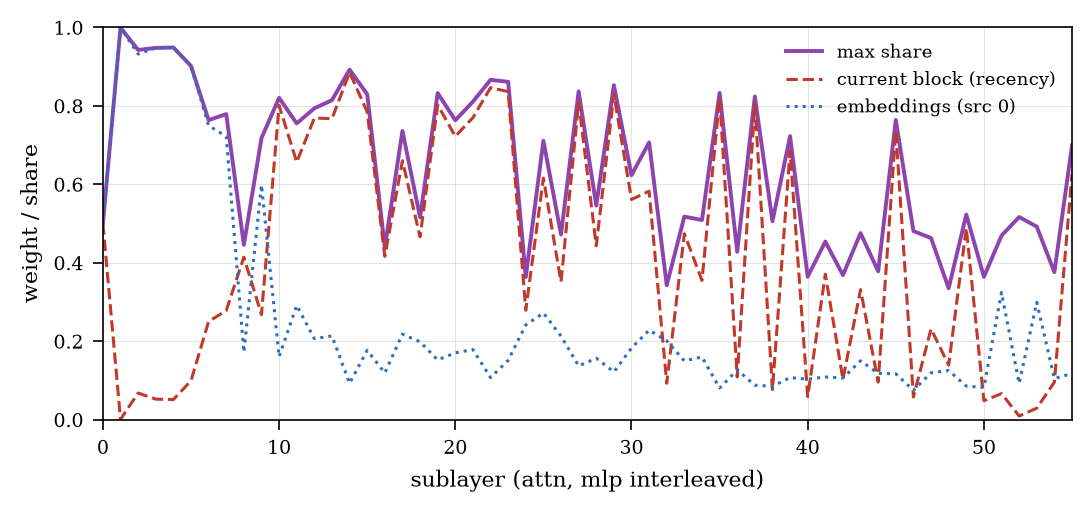}
\caption{AttnRes per-sublayer hub composition: the share taken by the top source (max share),
the current block (recency), and the token embeddings (source 0), across the 56 sublayers. Early
sublayers route from the embeddings, the middle from the current block, and the late ones from
intermediate content blocks.}
\label{fig:app-attnres-profile}
\end{figure}

\begin{figure}[htbp]
\centering
\includegraphics[width=0.85\textwidth]{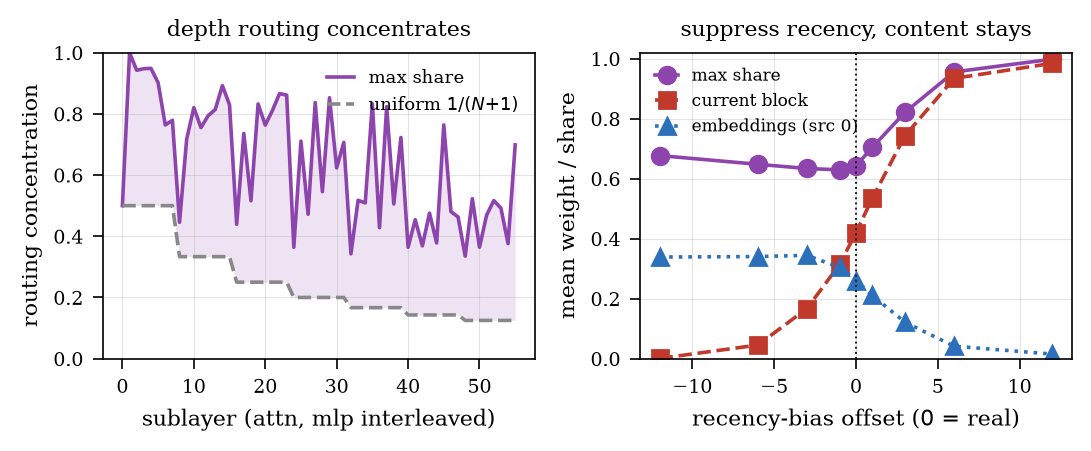}
\caption{AttnRes routes over depth, and the signature appears there too. Left: across the 56
sublayers the top source's share of the routing (max share) sits well above the uniform baseline
$1/(N{+}1)$, so the routing concentrates throughout depth. Right: the learned recency bias is zero,
so we offset it. Forcing it ($+$) routes all weight to the current block (the additive-residual
limit); suppressing it ($-$) drives the current block to zero yet leaves the concentration intact,
with the weight migrating to the token-embedding hub. The concentration is content-driven, not
positional.}
\label{fig:attnres}
\end{figure}

\subsection{Direction versus geometry under retraining}
A final control isolates what is contingent in the attractor from what is structural. Training two
copies of a 12-layer Pythia from different random seeds, we compare their attractor subspaces by
canonical angle and their attractor geometry by VE. The geometry matches across seeds (rank-8 VE
differs by under $0.1$ throughout) while the direction does not (canonical angles well above zero
past the transition). Both weight-initialization and data-order seeds break the direction. The
attractor that forms is structurally determined; the particular subspace it forms in is an accident
of training (Figure~\ref{fig:direction-geometry}).

\begin{figure}[htbp]
\centering
\includegraphics[width=0.85\textwidth]{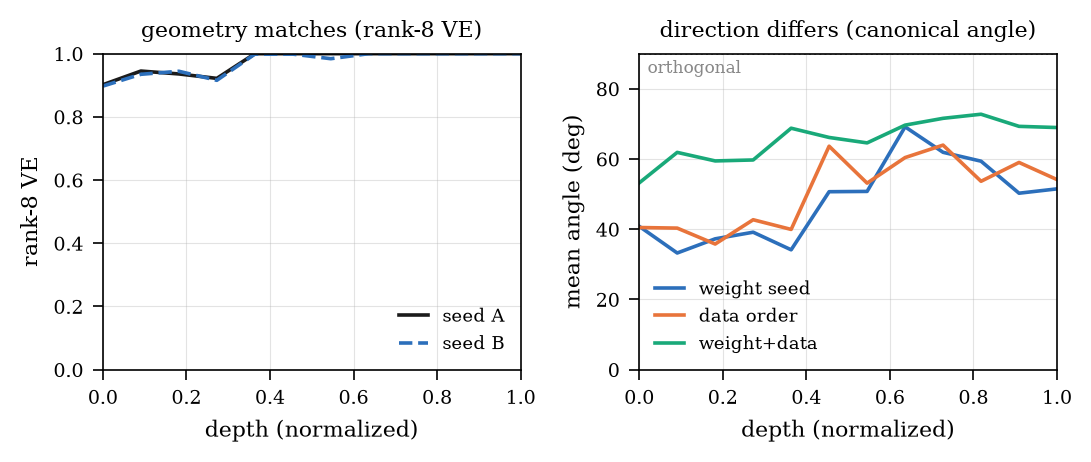}
\caption{Direction versus geometry under retraining. Left: rank-8 VE for two 12-layer Pythia models
trained from different weight seeds is nearly identical at every depth (maximum difference $0.016$),
so the attractor geometry is reproducible. Right: the mean canonical angle between their attractor
subspaces is large at every depth and for all three reseed types (weight, data order, and both),
well away from zero and toward the orthogonal ceiling of 90 degrees, so the particular subspace is
not reproducible. The geometry is structural; the direction is an accident of training.}
\label{fig:direction-geometry}
\end{figure}

%% file: tables/tab_app_transformer_full.tex
\begin{table}[htbp]
\centering
\scriptsize
\setlength{\tabcolsep}{4pt}
\begin{tabular}{lrrrrrrrrrrrr}
\toprule
Model & $L$ & $d_h$ & norm-CV & min & VE@8 & shuf & Gauss & cont. & marg. & onset & $L^\ast$ & $r$ \\
\midrule
GPT-2 small & 12 & 64 & 0.151 & 0.126 & 0.774 & 0.514 & 0.294 & 0.260 & 0.220 & 17\% & 33\% & 0.37 \\
GPT-2 medium & 24 & 64 & 0.210 & 0.114 & 0.758 & 0.471 & 0.294 & 0.287 & 0.176 & 17\% & 21\% & 0.45 \\
GPT-2 large & 36 & 64 & 0.249 & 0.159 & 0.732 & 0.422 & 0.294 & 0.310 & 0.127 & 22\% & 31\% & 0.59 \\
GPT-2 XL & 48 & 64 & 0.307 & 0.143 & 0.737 & 0.415 & 0.294 & 0.322 & 0.120 & 17\% & 60\% & 0.70 \\
\midrule
Pythia 160M & 12 & 64 & 0.258 & 0.157 & 0.885 & 0.819 & 0.294 & 0.066 & 0.525 & 17\% & 33\% & 0.28 \\
Pythia 410M & 24 & 64 & 0.260 & 0.185 & 0.783 & 0.636 & 0.294 & 0.147 & 0.342 & 25\% & 25\% & 0.05 \\
Pythia 1B & 16 & 256 & 0.254 & 0.157 & 0.572 & 0.370 & 0.160 & 0.202 & 0.210 & 25\% & 25\% & 0.18 \\
Pythia 1.4B & 24 & 128 & 0.289 & 0.197 & 0.733 & 0.554 & 0.212 & 0.179 & 0.342 & 12\% & 17\% & 0.19 \\
Pythia 2.8B & 32 & 80 & 0.286 & 0.227 & 0.753 & 0.539 & 0.262 & 0.214 & 0.277 & 9\% & 19\% & 0.07 \\
\bottomrule
\end{tabular}
\caption{Full per-model transformer measurements (WikiText-103, $N{=}150$, sequence length 128). norm-CV (mean) and its per-layer minimum are the key-norm coefficient of variation; VE@8 is the depth-averaged rank-8 variance explained of the key distance matrix, with its shuffle and Gaussian (Marchenko-Pastur) nulls; cont.\ and marg.\ are the content (real minus shuffle) and marginal (shuffle minus Gauss) parts; onset and $L^\ast$ are the first layers whose mean concentration exceeds $0.20$ and $0.50$; $r$ is the key-norm/concentration correlation. The Gaussian null depends only on $d_h$, the Marchenko-Pastur ladder: $0.294$ at $d_h{=}64$, $0.262$ at $80$, $0.212$ at $128$, $0.160$ at $256$.}
\label{tab:app-transformer-full}
\end{table}

%% file: tables/tab_app_mamba.tex
\begin{table}[htbp]
\centering
\small
\begin{tabular}{lrrrrrr}
\toprule
Mamba & $L$ & onset & peak & $\Delta$-gap & VE$_B$@8 & null \\
\midrule
130M & 24 & 67\% & 0.31 & $+0.28$ & 0.93 & 0.65 \\
370M & 48 & 81\% & 0.23 & $+0.17$ & 0.94 & 0.65 \\
790M & 48 & 69\% & 0.28 & $+0.24$ & 0.94 & 0.65 \\
1.4B & 48 & 56\% & 0.29 & $+0.22$ & 0.95 & 0.65 \\
\bottomrule
\end{tabular}
\caption{Mamba selectivity across four scales. onset is the first layer whose mean hidden-attention concentration exceeds 0.20; peak is its maximum over depth; $\Delta$-gap is the drop in peak concentration when the selective gate $\Delta$ is frozen to its per-position mean, the causal effect of selectivity; VE$_B$@8 and its null are the rank-8 variance explained of the input matrix $B$ and its matched Gaussian null. The signature is robust across scale without following a clean scaling law.}
\label{tab:app-mamba}
\end{table}

%% file: tables/tab_app_gat.tex
\begin{table}[htbp]
\centering
\small
\begin{tabular}{lrrrrrr}
\toprule
Graph & nodes & GAT acc & GCN acc & VE@4 GAT & VE@4 GCN & diff \\
\midrule
Cornell & 183 & 0.622 & 0.405 & 0.969 & 0.686 & $+0.284$ \\
Texas & 183 & 0.568 & 0.649 & 0.956 & 0.773 & $+0.183$ \\
Wisconsin & 251 & 0.667 & 0.549 & 0.942 & 0.848 & $+0.094$ \\
\bottomrule
\end{tabular}
\caption{Graph attention versus a depth- and width-matched GCN control on three WebKB graphs. VE@4 is the depth-averaged rank-4 variance explained of the node features; the differential (GAT minus GCN) is the attention-specific collapse on top of generic oversmoothing, positive on all three. The accuracy comparison does not generalize (GCN wins on Texas), so we claim the collapse differential, not an accuracy coupling.}
\label{tab:app-gat}
\end{table}

%% file: tables/tab_app_attnres.tex
\begin{table}[htbp]
\centering
\small
\begin{tabular}{rrrrr}
\toprule
offset $\delta$ & max share & norm.\ entropy & frac.\ recency & frac.\ src-0 \\
\midrule
$-12$ & 0.677 & 0.471 & 0.001 & 0.339 \\
$-6$ & 0.648 & 0.530 & 0.045 & 0.340 \\
$-3$ & 0.634 & 0.570 & 0.167 & 0.344 \\
$-1$ & 0.629 & 0.588 & 0.314 & 0.304 \\
$+0$$^{\star}$ & 0.643 & 0.563 & 0.420 & 0.260 \\
$+1$ & 0.705 & 0.505 & 0.535 & 0.211 \\
$+3$ & 0.821 & 0.338 & 0.744 & 0.121 \\
$+6$ & 0.956 & 0.110 & 0.934 & 0.042 \\
$+12$ & 0.998 & 0.006 & 0.985 & 0.015 \\
\bottomrule
\end{tabular}
\caption{AttnRes depth routing under an additive offset to the (zero-valued) learned recency bias; $\star$ marks the trained model. Forcing the bias up ($\delta>0$) routes all weight to the current block, the additive-residual limit; suppressing it ($\delta<0$) drives the current block toward zero yet keeps the concentration intact (max share stays near its trained $0.64$), with the weight migrating to the token-embedding hub. The concentration is content-driven, not positional.}
\label{tab:app-attnres}
\end{table}

%% file: sections/appendix_convergent.tex
\section{Convergent evidence from independent work}
\label{app:convergent}

Several recent results, from groups with unrelated aims, arrive at pieces of the same picture. We
collect them here. None is load-bearing for the claims of the main text, which rest on the
measurements of Sections~\ref{sec:transformers} and~\ref{sec:crossarch} and the within-model
ablations; what follows is corroboration, reached by different routes.

\paragraph{Attention residuals: depth-axis sinks and a norm mechanism.}
\citet{kimi2026attnres} replace the additive residual with learned softmax attention over previous
layers, and report that a layer attends selectively to a few specific earlier layers rather than
the most recent one, with persistent weight on the token embeddings and a recency bias. That is the
depth-axis attractor we measure in Section~\ref{sec:crossarch}, read by them as a useful routing
pattern; their learned weights show the same early-embedding, mid-recency, late-content structure as
our Figure~\ref{fig:app-attnres-profile}. They also describe a PreNorm dilution effect: in a PreNorm
residual stream the hidden-state magnitudes grow with depth, so a later layer must emit an
ever-larger output to remain influential. This is a mechanism for the norm stratification we
observe. When the residual stream weights every contribution equally, magnitude is the only lever a
layer has for influence, and the keys inherit that inflation.

\paragraph{Projection sharing: the low-rank regime, and a collapse-to-SSM bridge.}
Asking whether all three attention projections are necessary, \citet{kayyam2026qkv} find the value
projection can be tied to the key (their Q-K=V variant) at a cost of about 3\% perplexity at 300M
and 2.5\% at 1.2B, and attribute the small cost to attention operating in a low-rank regime in which
the key and value occupy nearly the same subspace. Their weight-space measurements make this
concrete: across layers the trained key and value projections have cosine similarity $0.73$ and
nearly equal effective rank ($687$ versus $702$ of $1024$ dimensions), while the query stays
distinct (cosine $0.42$ with the key, $0.31$ with the value). This is the same low-rank collapse we
measure in the activation geometry, seen from the weight side: the keys are low-rank, and the values
have nearly merged into their subspace. Separately, the same paper derives a structural bridge we
did not. Under the full collapse $q=k=v$, a kernelized (linear) attention layer rewrites exactly as
a recurrent state-space update, $S_t = \lambda S_{t-1} + \phi(z_t) z_t^\top$ read out by the current
input, which is a state-space model with an input-conditioned rather than a fixed observation. It is
an idealization, linear attention rather than the softmax models we run, and it was reached from an
efficiency motive, but it is a formal echo of the continuum our measurements trace between attention
and state-space routing.

\paragraph{Memory caching: attention as cached recurrence.}
A second formal bridge runs the other way. \citet{behrouz2026memory} give recurrent models a growing
memory by caching checkpoints of the recurrent state, and show that the construction recovers gated
softmax attention as a special case, when each token is its own segment and the memory is
value-less, so that attention sits at one end of a complexity continuum whose other end is a
fixed-size recurrence. Where the projection-sharing result collapses attention into a recurrence,
this one expands a recurrence into attention. Both place the two on a single axis, which is the
cross-architecture reading of Section~\ref{sec:crossarch} reached by construction rather than by
measurement.

%% file: sections/appendix_taxonomy.tex
\section{A taxonomy of concentration phenomena}
\label{app:taxonomy}

The main text treats concentration as one mechanism with an architecture-dependent form. A
complementary question is where the concentration comes from in the first place. When reading
results across architectures it helps to separate two sources.

\paragraph{Structural concentration.}
Some concentration is inherited from fixed properties of the system rather than created by its
dynamics. The preference hierarchy exists before inference and is imposed by the substrate:
high-degree hubs in a graph, corpus-frequency imbalances in a language model, class imbalance in
a dataset, or a fixed architectural or retrieval prior. A diagnostic of structural concentration
is that removing the dominant element reveals the next element in a pre-existing hierarchy: the
ordering is largely fixed, and only the identity of the winner changes.

\paragraph{Emergent concentration.}
Other concentration is produced or amplified by the computational dynamics. Weak preferences are
reinforced through competition, routing, recurrence, or memory into dominant pathways that the
substrate did not hand the model. The four routers studied here, attention, selective state
spaces, recurrent mixing, and residual routing, all show concentration of this kind. Its
diagnostic is sensitivity to mechanism-level intervention: changing the routing changes which
source wins and how a winner emerges at all.

\paragraph{Concentration as a mixture.}
The distinction is not a binary. Most architectures sit somewhere on a spectrum, with a total
concentration that mixes the two sources,
\[ C_{\text{total}} = C_{\text{structural}} + C_{\text{emergent}}, \]
which we write schematically rather than as a measured decomposition. Equivalently, an
architecture can be placed in a plane spanned by a structural and an emergent axis, though the
precise placements remain an open empirical question. Graph attention carries a large structural
component from topology; the recurrent ablations isolate a large emergent component from learned
dynamics; transformer attention plausibly carries both. The question is then not whether
concentration is structural or emergent, but which source dominates in a given architecture and
how the two interact.

\paragraph{The controls in this work decompose the two.}
The taxonomy is more than interpretive here: two of our controls instantiate it. The GCN
control of Section~\ref{sec:crossarch} holds the graph fixed and removes learned attention, so its
collapse is the structural part, oversmoothing on the given topology; the GAT-minus-GCN
differential we report is then the emergent part, the concentration that learned attention adds on
top of the structure. So graph attention is not purely structural: the differential is exactly its
emergent component, separated out. The within-model ablations, freezing Mamba's selectivity and
sweeping RWKV's carry and AttnRes's recency, are direct probes of emergent concentration: each
intervenes on the mechanism and moves the concentration, the signature the taxonomy assigns to the
emergent source. The main results of this paper concern emergent concentration; the structural
component is held fixed or subtracted by these controls.

\paragraph{Scope.}
We offer this as an interpretive framework for organizing the observations, not as a proven
classification. A fuller account would measure the two components rather than reason about them,
and would extend the survey to families we do not test here, including probabilistic graphical
models, retrieval systems, and convolutional architectures. The broader hypothesis it suggests is
that concentration is a general property of systems that combine a structural bias with
competitive dynamics, and that the attractors we observe are particular mixtures of inherited
structure and emergent reinforcement.

%% file: references.bib
@article{jaynes1957,
  title={Information Theory and Statistical Mechanics},
  author={Jaynes, E. T.},
  journal={Physical Review},
  volume={106},
  number={4},
  pages={620--630},
  year={1957},
  publisher={American Physical Society}
}

@inproceedings{vaswani2017,
  title={Attention Is All You Need},
  author={Vaswani, Ashish and Shazeer, Noam and Parmar, Niki and Uszkoreit, Jakob
          and Jones, Llion and Gomez, Aidan N. and Kaiser, {\L}ukasz and Polosukhin, Illia},
  booktitle={Advances in Neural Information Processing Systems},
  volume={30},
  year={2017}
}

@inproceedings{li2018deeper,
  title={Deeper Insights into Graph Convolutional Networks for Semi-Supervised Learning},
  author={Li, Qimai and Han, Zhichao and Wu, Xiao-Ming},
  booktitle={AAAI Conference on Artificial Intelligence},
  year={2018}
}

@inproceedings{oono2020graph,
  title={Graph Neural Networks Exponentially Lose Expressive Power for Node Classification},
  author={Oono, Kenta and Suzuki, Taiji},
  booktitle={International Conference on Learning Representations},
  year={2020}
}

@article{ali2024hidden,
  title={The Hidden Attention of Mamba Models},
  author={Ali, Ameen and Zimerman, Itamar and Wolf, Lior},
  journal={arXiv preprint arXiv:2403.01590},
  year={2024}
}

@article{su2021roformer,
  title={RoFormer: Enhanced Transformer with Rotary Position Embedding},
  author={Su, Jianlin and Lu, Yu and Pan, Shengfeng and Murtadha, Ahmed and Wen, Bo and Liu, Yunfeng},
  journal={arXiv preprint arXiv:2104.09864},
  year={2021}
}

@inproceedings{press2022alibi,
  title={Train Short, Test Long: Attention with Linear Biases Enables Input Length Extrapolation},
  author={Press, Ofir and Smith, Noah A. and Lewis, Mike},
  booktitle={International Conference on Learning Representations},
  year={2022}
}

@inproceedings{shazeer2017moe,
  title={Outrageously Large Neural Networks: The Sparsely-Gated Mixture-of-Experts Layer},
  author={Shazeer, Noam and Mirhoseini, Azalia and Maziarz, Krzysztof and Davis, Andy
          and Le, Quoc and Hinton, Geoffrey and Dean, Jeff},
  booktitle={International Conference on Learning Representations},
  year={2017}
}

@article{kimi2026attnres,
  title={Attention Residuals},
  author={{Kimi Team}},
  journal={arXiv preprint arXiv:2603.15031},
  year={2026}
}

@inproceedings{velickovic2018gat,
  title={Graph Attention Networks},
  author={Veli{\v{c}}kovi{\'c}, Petar and Cucurull, Guillem and Casanova, Arantxa
          and Romero, Adriana and Li{\`o}, Pietro and Bengio, Yoshua},
  booktitle={International Conference on Learning Representations},
  year={2018}
}

@article{gu2023mamba,
  title={Mamba: Linear-Time Sequence Modeling with Selective State Spaces},
  author={Gu, Albert and Dao, Tri},
  journal={arXiv preprint arXiv:2312.00752},
  year={2023}
}

@inproceedings{peng2023rwkv,
  title={{RWKV}: Reinventing {RNN}s for the Transformer Era},
  author={Peng, Bo and Alcaide, Eric and Anthony, Quentin and others},
  booktitle={Findings of the Association for Computational Linguistics: EMNLP},
  year={2023}
}

@inproceedings{xiao2024streaming,
  title={Efficient Streaming Language Models with Attention Sinks},
  author={Xiao, Guangxuan and Tian, Yuandong and Chen, Beidi and Han, Song and Lewis, Mike},
  booktitle={International Conference on Learning Representations},
  year={2024}
}

@inproceedings{kayyam2026qkv,
  title={Do Transformers Need Three Projections? Systematic Study of {QKV} Variants},
  author={Kayyam, Ali and Madan Gopal, Anusha and Lewis, M. Anthony},
  booktitle={International Conference on Machine Learning},
  year={2026}
}

@article{behrouz2026memory,
  title={Memory Caching: {RNN}s with Growing Memory},
  author={Behrouz, Ali and Li, Zeman and Deng, Yuan and Zhong, Peilin and Razaviyayn, Meisam and Mirrokni, Vahab},
  journal={arXiv preprint arXiv:2602.24281},
  year={2026}
}

@inproceedings{hu2022lora,
  title={{LoRA}: Low-Rank Adaptation of Large Language Models},
  author={Hu, Edward J. and Shen, Yelong and Wallis, Phillip and Allen-Zhu, Zeyuan and Li, Yuanzhi
          and Wang, Shean and Wang, Lu and Chen, Weizhu},
  booktitle={International Conference on Learning Representations},
  year={2022}
}

@inproceedings{aghajanyan2021intrinsic,
  title={Intrinsic Dimensionality Explains the Effectiveness of Language Model Fine-Tuning},
  author={Aghajanyan, Armen and Gupta, Sonal and Zettlemoyer, Luke},
  booktitle={Annual Meeting of the Association for Computational Linguistics},
  year={2021}
}

@inproceedings{dong2021attention,
  title={Attention is Not All You Need: Pure Attention Loses Rank Doubly Exponentially with Depth},
  author={Dong, Yihe and Cordonnier, Jean-Baptiste and Loukas, Andreas},
  booktitle={International Conference on Machine Learning},
  year={2021}
}
